\newtheorem{lemma}{Lemma}
\newtheorem{theorem}{Theorem}
\newtheorem{remark}{Remark}
\newtheorem{example}{Example}
\newcommand{\R}{\mathsf{ReLU}}
\newcommand{\cross}{\mathsf{Cr}}
\newcommand{\unif}{\mathsf{Unif}}
\newcommand{\dotspace}{\,\cdot\,}
\title{Sharp Representation Theorems for ReLU Networks with Precise Dependence on Depth}
\author{%
  Guy Bresler\\
  Department of EECS\\
 MIT\\
  Cambridge, MA 02139 \\
  \texttt{guy@mit.edu} \\
   \And
 Dheeraj Nagaraj \\
   Department of EECS\\
 MIT\\
  Cambridge, MA 02139 \\
  \texttt{dheeraj@mit.edu} \\
}
\begin{document}

\maketitle

\begin{abstract}
    We prove sharp dimension-free representation results for neural networks with $D$ $\R$ layers under square loss for a class of functions $\mathcal{G}_D$ defined in the paper. These results capture the precise benefits of depth in the following sense:
    \begin{enumerate}
        \item The rates for representing the class of functions $\mathcal{G}_D$ via $D$ $\R$ layers is sharp up to constants, as shown by matching lower bounds.
        \item For each $D$, $\mathcal{G}_{D} \subseteq \mathcal{G}_{D+1}$ and as $D$ grows the class of functions $\mathcal{G}_{D}$ contains progressively less smooth functions.
        \item If $D^{\prime} < D$, then the approximation rate for the class $\mathcal{G}_D$ achieved by depth $D^{\prime}$ networks is strictly worse than that achieved by depth $D$ networks.
    \end{enumerate}
  This constitutes a fine-grained characterization of the representation power of feedforward networks of arbitrary depth $D$ and number of neurons $N$, in contrast to existing representation results which either require $D$ growing quickly with $N$ or assume that the function being represented is highly smooth. In the latter case similar rates can be obtained with a single nonlinear layer. Our results confirm the prevailing hypothesis that deeper networks are better at representing less smooth functions, and indeed, the main technical novelty is to fully exploit the fact that deep networks can produce highly oscillatory functions with few activation functions.
%
%
\end{abstract}

\section{Introduction}
Deep neural networks are the workhorse of modern machine learning \cite{goodfellow2016deep}. An important reason for this is the universal approximation property of deep networks which allows them to represent any continuous real valued function with arbitrary accuracy. Various representation theorems, establishing the universal approximation property of neural networks have been shown \cite{cybenko1989approximation,funahashi1989approximate,hanin2017approximating,hornik1989multilayer,lu2017expressive}. Under regularity conditions on the functions, a long line of work gives rates for approximation in terms of number of neurons~\cite{barron1993universal,klusowski2018approximation,li2019better,
liang2016deep,ma2019barron,safran2017depth,yarotsky2017error,
yarotsky2018optimal,schmidt2019deep,bresler2020corrective}. 
By now the case of a single layer of nonlinearities is fairly well understood, while the corresponding theory for deep networks is lacking.

Deep networks have been shown empirically to significantly outperform their shallow counterparts and a flurry of theoretical papers has aimed to understand this. For instance, \cite{schmidt2017nonparametric} shows that letting depth scale with the number of samples gives minimax optimal error rates for non-parametric regression tasks. \cite{allen2020backward} considers hierarchical learning in deep networks, where training with SGD yields layers that successively construct more complex features to represent the function. While an understanding of the generalization performance of neural networks trained on data is a holy grail, motivated by the logic that expressivity of the network determines the fundamental barriers under arbitrary optimization procedures, in this paper we focus on the more basic question of representation.

A body of work on \emph{depth separation} attempts to gain insight into the benefits of depth by constructing functions which can be efficiently represented by networks of a certain depth but cannot be represented by shallower networks unless their width is very large \cite{poggio2017and,poole2016exponential,safran2017depth,daniely2017depth,delalleau2011shallow,eldan2016power,telgarsky2016benefits,cohen2016expressive}. 
For instance, \cite{eldan2016power} shows the existence of radial functions which can be easily approximated by two nonlinear layers but cannot be approximated by one nonlinear layer and \cite{telgarsky2016benefits} shows the existence of oscillatory functions which can be approximated easily by networks with $D^3$ nonlinear layers but cannot be approximated by $2^D$-width networks of $D$ nonlinear layers. \cite{chatziafratis2019depth,chatziafratis2020better} extend these results using ideas from dynamical systems and obtain depth width trade-offs.
 In the different setting of representing probability distributions with sum-product networks, \cite{martens2014expressive} on  shows strong $D$ versus $D+1$ separation results. 
All of these results show \emph{existence} of a function requiring a certain depth, but do not attempt to characterize the \emph{class} of functions approximable by networks of a given depth. 

For neural networks with $N$ nonlinear units in a single layer, classical results obtained via a law of large numbers type argument yields a $1/N$ rate of decay for the square loss~\cite{barron1993universal,klusowski2018approximation}. 
Several papers suggest a benefit of increased depth \cite{li2019better,liang2016deep,safran2017depth,yarotsky2017error} by implementing a Taylor series approximation to show that deep $\R$ or $\mathsf{RePU}$ neural networks can achieve rates faster than $1/N$, when the function being represented is very smooth and the depth is allowed to grow as the loss tends to $0$. However, it was shown recently in \cite{bresler2020corrective} that when such additional smoothness is assumed, a \emph{single} layer of nonlinearities suffices to achieve similar error decay rates. Therefore, the benefits of depth are not captured by these results.

The work most related to ours is \cite{yarotsky2018optimal}, which considers representation of functions of a given modulus of continuity under the sup norm. When depth $D$ scales linearly with the total number of activation functions $N$, the rate of error decay is shown to be strictly better than when $D$ is held fixed. This does indicate that depth is fundamentally beneficial in representation, but the rates are dimension-dependent and hence, as will become clear, the results are far from sharply characterizing the exact benefits of depth.



In this paper we give a fine-grained characterization of the role of depth in representation power of $\R$ networks. Given a network with $D$ $\R$ layers and input dimension $d$, we define a class $\mathcal{G}_D$ of real valued functions characterized by the decay of their Fourier transforms, similar to the class considered in classical works such as \cite{barron1993universal}. As $D$ increases, the tails of the Fourier transforms are allowed to be fatter, thereby capturing a broader class of functions. Note that decay of a function's Fourier transform is well-known to be related to its smoothness (c.f., \cite{friedlander1998introduction}).   Our results stated in Section~\ref{sec:main_results} show that a network with $N$ $\R$ units in $D$ layers can achieve rates of the order $N^{-1}$ for functions in the class $\mathcal{G}_D$ whereas networks with $D^{\prime} < D$ $\R$ layers must suffer from slower rates of order $N^{-{D^{\prime}}/{D}}$. All of these rates are optimal up to constant factors. As explained in Section~\ref{subsec:main_ideas}, we prove these results by utilizing the compositional structure of deep networks to systematically produce highly oscillatory functions which are hard to produce using shallow networks. 


\paragraph{Organization.}
The paper is organized as follows.  In Section~\ref{sec:notation}, we introduce notation and define the problem. In Section~\ref{subsec:main_ideas}, we overview the main idea behind our results, which are then stated formally in Section~\ref{sec:main_results}. Sections~\ref{sec:technical_results}, ~\ref{sec:upper_bound_proof}, and~\ref{sec:lower_bound_proof} prove these results. 
\section{Notation, Problem Setup, and Fourier Norms}
\label{sec:notation}

\paragraph{Notation.} For $t\in \mathbb{R}$ let $\R(t) = \max(0,t)$.
In this work, the depth $D$ refers to the number of $\R$ layers. Let the input dimension be $d$. Given $n_1,n_2,\dots, n_D \in \mathbb{N}$, we let $d_i = n_{i-1}$ when $i \geq 2$ and $d_1 = d$.  Let $f_i :\mathbb{R}^{d_i} \to \mathbb{R}^{n_i}$ be defined by $f_i(x) = \sum_{j=1}^{n_i}\R(\langle x, W_{ij}\rangle - T_{ij}) e_j$, where $e_j$ are the standard basis vectors, $W_{ij} \in \mathbb{R}^{d_i}$, and $T_{ij} \in \mathbb{R}$. For $a\in \mathbb{R}^{n_D}$, we define the $\R$ network corresponding to the parameters $d, D, n_1,\dots, n_D, W_{ij},T_{ij}$ and $a$ to be $\hat{f}(x) = \langle a, f_D\circ f_{D-1} \dots \circ f_{1}(x) \rangle$. 
The number of $\R$ units in this network is $N = \sum_{i=1}^{D} n_i$.

\paragraph{The representation problem.}
Consider a function $f :\mathbb{R}^d \to \mathbb{R}$.
 Given any probability measure $\mu$ over $B_d(r):= \{x \in \mathbb{R}^d : \|x\|_2 \leq r\}$, we want to understand how many $\R$ units are necessary and sufficient in a neural network of depth $D$ 
 such that its output $\hat{f}$ has square loss 
 bounded as $\int \big(f(x) - \hat{f}(x)\big)^2\mu(dx) \leq \epsilon \,.$
 

 \paragraph{Fourier norms.} Suppose $f$ has Fourier transform $F$, which for $f\in L^1(\mathbb{R}^d)$ is given by
 $$
 F(\xi) = \int_{\mathbb{R}^d} f(x) e^{i\langle \xi, x\rangle} dx\,.
 $$  
The Fourier transform is well-defined also for larger classes of functions than $L^1(\mathbb{R}^d)$ \cite{friedlander1998introduction}. If $F$ is a function, then we assume $F \in L^{1}(\mathbb{R}^d)$, but we also allow it to be a finite complex measure and integration with respect to $F(\xi)d\xi$ is understood to be integration with respect to this measure. Under these conditions we have the Fourier inversion formula
\begin{equation}\label{eq:fourier_inversion}
f(x) = \frac{1}{(2\pi)^d}\int_{\mathbb{R}^d}F(\xi)e^{-i\langle \xi,x\rangle} d\xi\,.
\end{equation}
For $\alpha \geq 0$, define the \emph{Fourier norms} as in~\cite{barron1993universal},
\begin{equation}\label{eq:fourier_norms}
C_f^{\alpha} = \frac{1}{(2\pi)^d}\int_{\mathbb{R}^d}|F(\xi)|\cdot \|\xi\|^{\alpha}d\xi\,.
\end{equation}
We define a sequence of function spaces $\mathcal{G}_K$ such that $\mathcal{G}_{K} \subseteq \mathcal{G}_{K+1}$ for $K \in \mathbb{N}$:
$$\mathcal{G}_K := \{f : \mathbb{R}^d \to \mathbb{R} : C_f^{0} + C_f^{1/K} < \infty\}\,.$$
The domain $\mathbb{R}^d$ is usually implicit, but we
 occasionally write $\mathcal{G}_K(\mathbb{R}^d)$ or $\mathcal{G}_K(\mathbb{R})$ for clarity. 
Since decay of the Fourier transform is related to smoothness of the function, the sequence of function spaces $(\mathcal{G}_K)$ adds functions with less smoothness as $K$ increases. It is hard to exactly characterize this class of functions, but we note that a variety of function classes, when modified to decay to $0$ outside of $B_d(r)$ (by multiplying with a suitable `bump function'), 
 are included in $\mathcal{G}_K$ for all $K$ large enough. These include polynomials, trigonometric polynomials, (for all $K \geq 1$) and any $\R$ network of any depth (when $K \geq 2$).



Theorems~\ref{thm:main_upper_bound} and \ref{thm:main_lower_bound} show that the quantity $C_f^{0}(C_f^{1/K} + C_f^{0})$ effectively controls the rate at which $f$ can be approximated by a depth $K$ $\R$ network (at least in a ball of radius $r=1$, with suitable modification for arbitrary $r$). As $K$ increases, the class of functions for which $C_f^{0}(C_f^{1/K} + C_f^{0}) \leq 1$ grows to include less and less smooth functions. The following two examples illustrate this behavior:

\begin{example} Consider the Gaussian function $f :\mathbb{R}^d \to \mathbb{R}^d$ given by $f(x) = e^{-\frac{\|x\|^2}{2}}$. Its Fourier transform is $F(\xi) = (2\pi)^{\tfrac{d}{2}}e^{-\frac{\|\xi\|^2}{2}}$. A simple calculation shows that $C_f^{0}(C_f^{1/K} + C_f^{0}) \sim d^{1/2K}+1$. 
Thus, when $K \gtrsim \log d$, the quantity $C_f^{0}(C_f^{1/K} + C_f^{0})$ remains bounded for any dimension $d$. 
\end{example}	

\begin{example} Let $n \in \mathbb{N}$ be large and consider the function $f : \mathbb{R} \to \mathbb{R}$ given by $f(x) = {\cos(n x)}/{n^{\alpha}}$. As $\alpha$ decreases, the oscillations grow in magnitude and one can check that $C_f^{0}(C_f^{1/K}+C_f^{0}) = n^{1/K - 2\alpha} + n^{-2\alpha}$. When $K \geq {1}/{2\alpha}$, the rates are essentially independent of $n$.
\end{example}

\section{Using Depth to Improve Representation}
\label{subsec:main_ideas}
Before stating our representation theorems in the next section, we now briefly explain the core ideas:
\begin{enumerate}
    \item Following \cite{barron1993universal}, we use the inverse Fourier transform to represent $f(x)$ as an expectation of $A\cos(\langle \xi,x\rangle + \theta(\xi))$ for some random variable $\xi$ and then implement $\cos(\langle \xi,x\rangle + \theta(\xi))$ using $\R$ units. 
    \item We use an idea similar to the one in \cite{telgarsky2016benefits} to implement a triangle waveform $T_k$ with $2k$ peaks using $\sim k^{1/D}$ $\R$ units arranged in a network of depth $D$. 
    \item  Composition of low frequency cosines with triangle waveforms is then used to efficiently approximate the high frequency cosines of the form $\cos(\langle \xi,x\rangle + \theta(\xi))$ via $\R$ units. 
\end{enumerate}
Suppose that we want to approximate the function 
$f(t) = \cos(\omega t)$ for $t$ in the interval $[-1,1]$
using a $\R$ network with a single hidden layer (as in Item 1 above).  
Because the interval $[-1,1]$ contains $\Omega(\omega)$ periods of the function, effectively tracking it requires $\Omega(\omega)$ $\R$ units.  It turns out that this dependence can be significantly improved if we allow two nonlinear layers.
The first layer is used to implement the triangle waveform $T_k$ on $[-1,1]$ for $k=\Theta(\sqrt \omega)$, which oscillates at a frequency $\sqrt{\omega}$ and uses $O(\sqrt{\omega})$ $\R$ units. Then the second layer is used to implement $\cos(\sqrt{\omega}t)$, again with $O(\sqrt{\omega})$ $\R$ units. The output of the two layers combined is $\cos(\sqrt{\omega}T_k(t))$, and since $\cos(\sqrt{\omega}t)$ and $T_k(t)$ each oscillate with frequency $\sqrt{\omega}$, it follows that their composition $\cos(\sqrt{\omega}T_k(t))$ oscillates at the frequency $\omega$, and one can show more specifically that we obtain the output $\cos(\omega t)$. We check that the network requires only
 $O(\sqrt{\omega})$ $\R$ units. A similar argument shows that networks of depth $D$ require only $O(\omega^{1/D})$ $\R$ units. Surprisingly, this simple idea yields optimal dependence of representation power on depth.  We illustrate this in Figure~\ref{fig:freq_multiplication}.

\begin{figure}
     \centering
     \begin{subfigure}[b]{0.46\textwidth}
         \centering
\includegraphics[height = 3.8cm,width = 6.3cm]{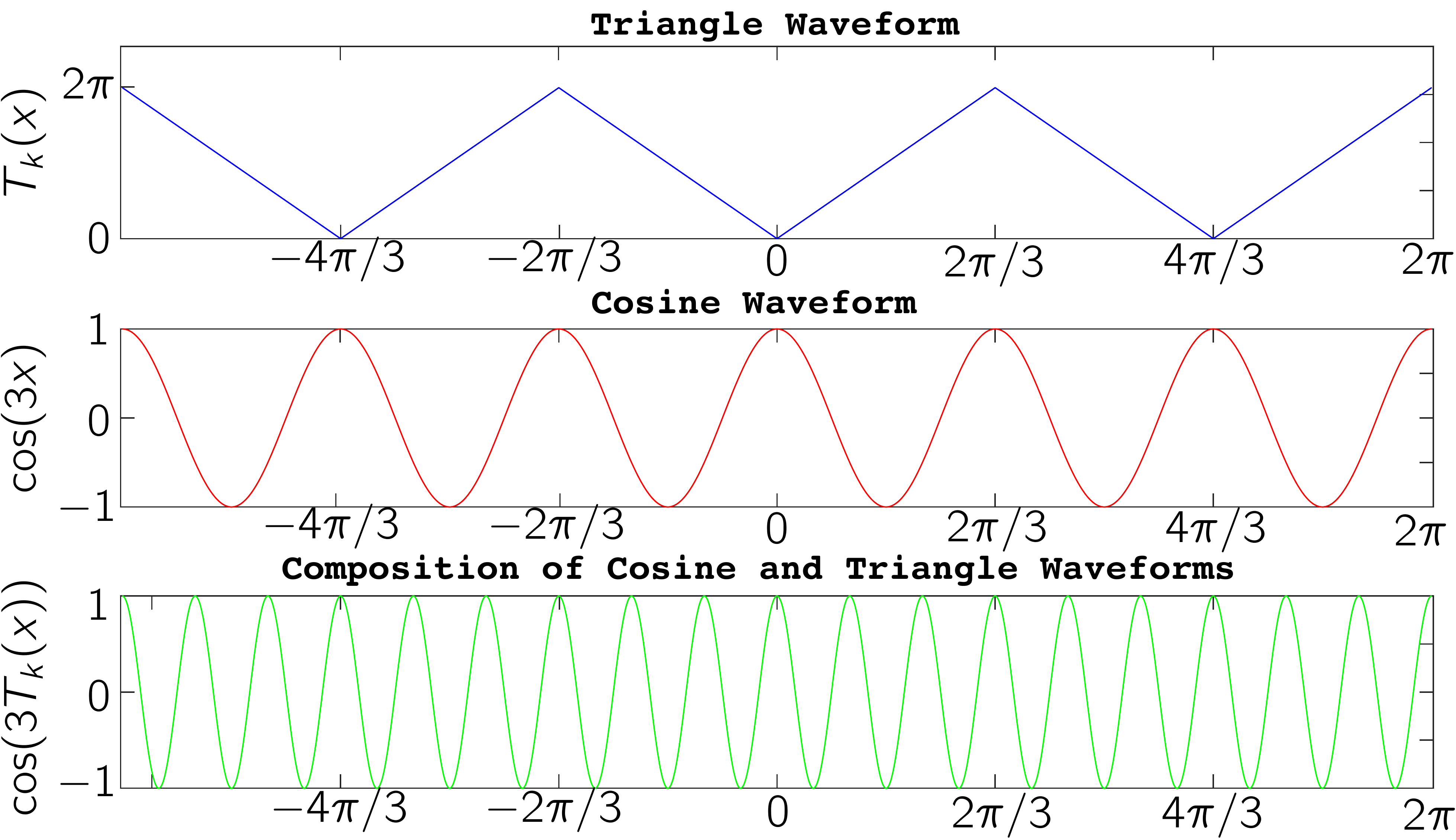}
         \caption{Frequency Multiplication}
         \label{fig:freq_multiplication}
     \end{subfigure}
\hfill
     \begin{subfigure}[b]{0.46\textwidth}
         \centering
         \includegraphics[height = 3.65cm,width = 6.3cm]{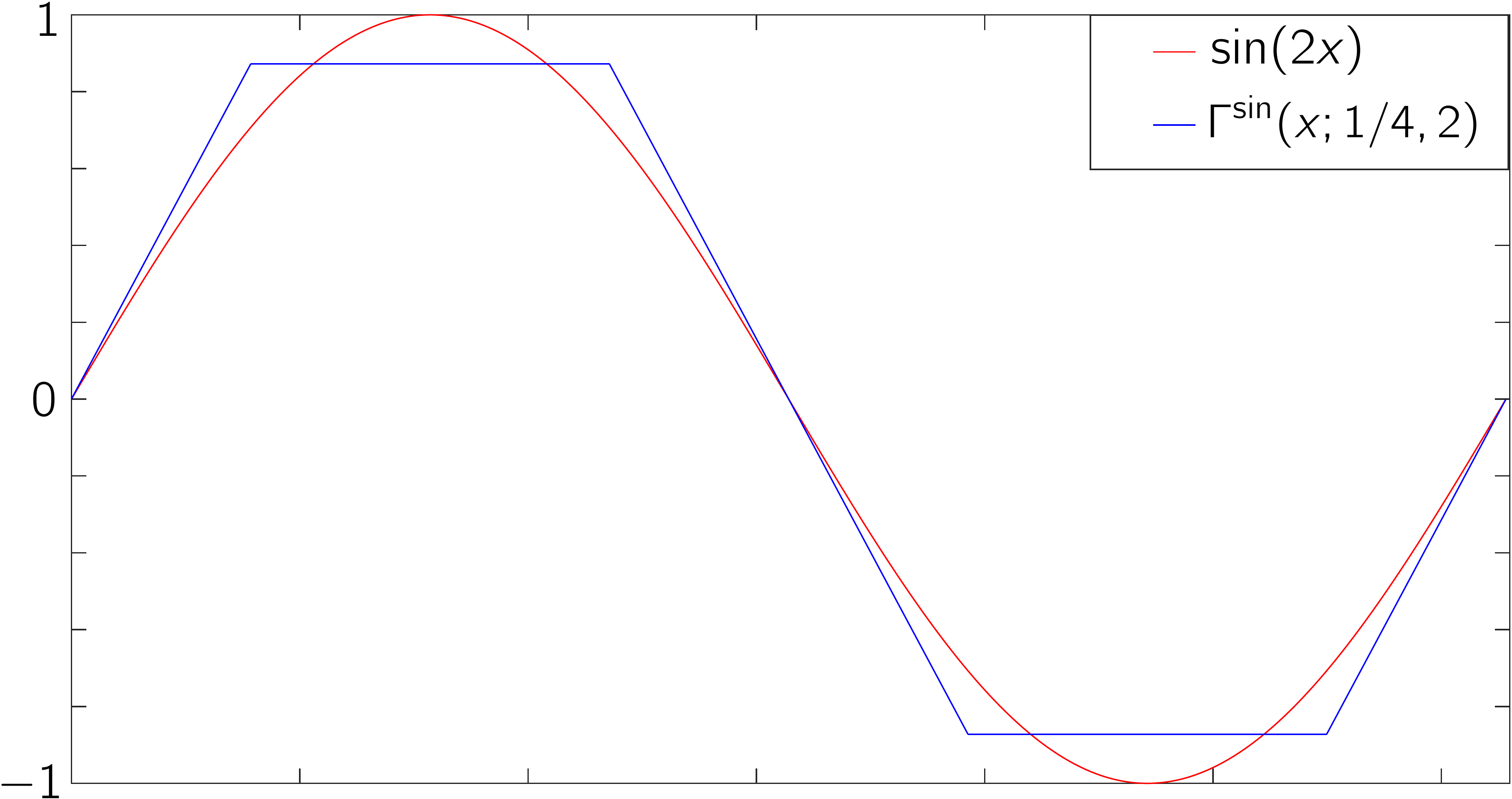}
         \caption{Random $\R$ Approximation of $\sin (\cdot)$}
         \label{fig:random_relu_approximation}
     \end{subfigure}
       \caption{Conceptually, our representation result is shown by a combination of composition of $\R$ with sinusoids to boost frequencies as depicted in (a), and using $\R$s
       to approximate sinusoids which appear in the Fourier transform of a function as depicted in (b).}
\end{figure}

\section{Main Results}\label{sec:main_results}

\begin{theorem}\label{thm:main_upper_bound}
Suppose $f :\mathbb{R}^d \to \mathbb{R}$ is such that $f \in \mathcal{G}_K$ for some $K \geq 1$. Let $\mu$ be any probability measure over $B_d(r)$ and let $D\in \mathbb{N}$ be such that $D \leq K$. There exists a $\R$ network with $D$ $\R$ layers and at most $N_0$ $\R$ units in total such that its output $\hat f$ satisfies
\begin{equation}\label{eq:mainSqLoss}
\int \big(f(x)-\hat{f}(x)\big)^2\mu(dx) \leq 
A_0\Big( \frac D{N_0} \Big)^{D/K}\big(r^{1/K}C_f^{1/K}C_f^{0}+ (C_f^{0})^2\big)\,,
\end{equation}
where $A_0$ is a universal constant given in the proof.
\end{theorem}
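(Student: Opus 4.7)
The plan is to follow the three-step recipe of Section~\ref{subsec:main_ideas}: a Barron-style Fourier expansion of $f$, Monte-Carlo sampling of its cosine components, and a Telgarsky-style depth-$D$ composition of triangle waves to implement each sampled cosine cheaply.

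First I would write
\begin{equation*}
f(x) - f(0) \;=\; C_f^0\,\mathbb{E}_{\xi \sim \pi}\!\big[\cos(\langle \xi, x\rangle + \theta(\xi)) - \cos(\theta(\xi))\big],
\end{equation*}
where $\pi$ is the probability measure of density $|F|/((2\pi)^d C_f^0)$ and $\theta(\xi) = \arg F(\xi)$; this uses conjugate symmetry $F(-\xi) = \overline{F(\xi)}$ together with \eqref{eq:fourier_inversion}. Since $f \in \mathcal{G}_K$ need not have $C_f^{1/D}$ finite when $D < K$, I would then introduce a frequency cutoff $\omega^*$ and split $f = f_{\mathrm{low}} + f_{\mathrm{high}}$ according to $\|\xi\| \leq \omega^*$ vs.\ $\|\xi\| > \omega^*$. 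A Markov-type bound using $\mathbb{E}_\pi[\|\xi\|^{1/K}] = C_f^{1/K}/C_f^0$ yields $\|f_{\mathrm{high}}\|_\infty \lesssim C_f^{1/K}/(\omega^*)^{1/K}$. Drawing $\xi_1,\ldots,\xi_m$ i.i.d.\ from the truncated $\tilde\pi \propto |F|\,\mathbf{1}\{\|\xi\|\leq\omega^*\}$ and forming the empirical cosine average $f_m$, a standard variance computation gives $\mathbb{E}\|f_{\mathrm{low}} - f_m\|_{L^2(\mu)}^2 \lesssim (C_f^0)^2/m$, so some realization achieves this bound.

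Next, for each sampled $\xi_i$ I would invoke the depth-$D$ cosine primitive illustrated in Figure~\ref{fig:freq_multiplication}: the first $D-1$ layers implement the triangle waveform $T_{k_i}(\langle \xi_i/\|\xi_i\|, x\rangle)$ with $k_i \sim (r\|\xi_i\|)^{(D-1)/D}$ using $\sim (D-1)(r\|\xi_i\|)^{1/D}$ $\R$ units (Telgarsky-style composition), and the last layer uses a Klusowski--Barron random-$\R$ approximation (Figure~\ref{fig:random_relu_approximation}) of the outer cosine $\cos(\|\xi_i\|^{1/D}\,T_{k_i}(\cdot))$. With the right choice of slopes and phases, the composition reproduces $\cos(\langle \xi_i, x\rangle + \theta(\xi_i))$. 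The per-cosine cost is $\sim D(r\|\xi_i\|)^{1/D}$ $\R$ units, and summing over samples with the H\"older bound $\mathbb{E}_{\tilde\pi}[\|\xi\|^{1/D}] \leq (\omega^*)^{1/D - 1/K}\,C_f^{1/K}/C_f^0$ (which exploits $\|\xi\|\leq \omega^*$ and $1/D \geq 1/K$) gives
\begin{equation*}
N_0 \;\lesssim\; m\,D\,r^{1/D}(\omega^*)^{1/D - 1/K}\,\frac{C_f^{1/K}}{C_f^0}\,.
\end{equation*}

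Combining the sampling error, the truncation error $\big(C_f^{1/K}/(\omega^*)^{1/K}\big)^2$, and any residual top-layer approximation error into a single upper bound, I would optimize $\omega^*$ for fixed $N_0$; the balance should collapse to $A_0(D/N_0)^{D/K}\big(r^{1/K}C_f^{1/K}C_f^0 + (C_f^0)^2\big)$. The hardest part will be the cosine primitive of Step~3: showing that a depth-$D$ network with only $\sim D(r\omega)^{1/D}$ $\R$ units can produce $\cos(\omega t)$ on $[-r,r]$ (or approximate it to error eventually dominated by the sampling error). This rests on choosing the slopes of $T_{k_i}$ so that $\cos(\omega^{1/D} T_{k_i}(t))$ collapses to $\cos(\omega t)$ --- exploiting the evenness of cosine to match the up/down phases of $T_{k_i}$ --- followed by a Klusowski--Barron random-$\R$ implementation of the outer cosine at frequency $\omega^{1/D}$. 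Once that primitive and the H\"older estimate are in hand, the remaining optimization over $\omega^*$ is a routine calculus exercise.
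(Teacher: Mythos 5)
Your overall architecture (Fourier expectation of cosines, Monte--Carlo sampling, depth-$D$ triangle-wave composition with a random-$\R$ outer cosine) is exactly the paper's, and your ``hardest part'' --- the exact depth-$D$ primitive producing $\cos(\omega t)$ on $[-r,r]$ with $\sim D(r\omega)^{1/D}$ units --- is precisely the content of Lemmas~\ref{lem:frequency_multiplication}--\ref{lem:deep_cosine}, handled the way you sketch. Where you genuinely diverge is in how you cope with the fact that only $C_f^{1/K}$, not $C_f^{1/D}$, is assumed finite. You truncate at a frequency $\omega^*$ and pay a bias term, then optimize $\omega^*$. The paper never truncates: it keeps the full (possibly heavy-tailed) sampling measure $\nu_f$, notes that the total unit count $N=\sum_j N(\xi_j)$ satisfies $N^{D/K}\le\sum_j N(\xi_j)^{D/K}$ by subadditivity of $x\mapsto x^{D/K}$ (since $D\le K$), so that $\mathbb{E}N^{D/K}\lesssim l\,r^{1/K}C_f^{1/K}/C_f^0$ involves only the $1/K$-th moment, and then applies Markov to $N^{D/K}$ together with a union bound against the loss event. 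This is cleaner (no bias term, no optimization) and produces the stated norm combination directly.

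Your route has one concrete defect as written: the truncation error $\bigl(C_f^{1/K}/(\omega^*)^{1/K}\bigr)^2$ carries a factor $(C_f^{1/K})^2$, and after you balance it against the variance term the optimized bound involves powers of $(C_f^{1/K})^2$ that are \emph{not} controlled by $r^{1/K}C_f^{1/K}C_f^0+(C_f^0)^2$. To see that this is not cosmetic, take the lower-bound witness $f(x)=\cos(\omega x/r)/\omega^{1/2K}$ with $\omega\to\infty$: there $r^{1/K}C_f^{1/K}C_f^0$ stays $\Theta(1)$ while $(C_f^{1/K})^2\asymp r^{-2/K}\omega^{1/K}\to\infty$, so your balance cannot ``collapse'' to the claimed form. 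The fix is one line: since also $\|f_{\mathrm{high}}\|_\infty\le C_f^0$, bound $\|f_{\mathrm{high}}\|_{L^2(\mu)}^2\le C_f^0\cdot C_f^{1/K}(\omega^*)^{-1/K}$ by multiplying the two sup-norm bounds. With that, equating $C_f^0C_f^{1/K}Dr^{1/D}(\omega^*)^{1/D-1/K}/N_0$ with $C_f^0C_f^{1/K}(\omega^*)^{-1/K}$ gives $\omega^*=(N_0/D)^D/r$ and the error $r^{1/K}C_f^{1/K}C_f^0(D/N_0)^{D/K}$, as desired; the $(C_f^0)^2(D/N_0)^{D/K}$ term then absorbs the $O(D)$ additive per-sample unit cost and the degenerate case where $N_0$ is too small to afford a single sample (use $\hat f\equiv0$ and $|f|\le C_f^0$). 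You should also note that both the loss bound and the unit-count bound hold only with some probability over the draw of $(\xi_j,S_j)$, so a union bound (as in the paper) is needed to extract a single realization satisfying both.
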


We give the proof in Section \ref{sec:upper_bound_proof}.

\begin{remark} Theorem~\ref{thm:main_upper_bound} requires $D\leq K$, but can be applied also in the case $D>K$ as follows. 
 If $D > K$ and $f \in \mathcal{G}_K$, then $f \in \mathcal{G}_D$ since $  C_f^{1/D} + C_f^{0} \leq 2( C_f^{1/K} + C_f^{0}) $. By an application of Theorem~\ref{thm:main_upper_bound} for $f\in \mathcal{G}_D$, we obtain an upper bound on the square loss of
$O\big(\frac{D}{N_0}\big(r^{1/D}C_f^{1/D}C_f^{0}+ (C_f^{0})^2\big)\big)$, which is of the order $1/N_0$.
	Thus, our upper bound for the class $\mathcal{G}_K$ becomes better as the depth $D$ increases and saturates at $D = K$, giving an upper bounds of the order $1/N_0$. 
	\end{remark}
	
\begin{remark} Getting rates faster than $1/N_0$ for $f\in \mathcal{G}_K$ using $D> K$ layers may be possible using ideas from \cite{bresler2020corrective}. We intend to address this problem in future work.
\end{remark}

\begin{remark}	 When $D = K=1$, Theorem~\ref{thm:main_upper_bound} captures the $O(1/N_0)$ rate achieved by \cite{klusowski2018approximation} which is shown under the assumption that $C_f^2 < \infty$ instead of $C_f^1 + C_f^0 < \infty$ as given here.
\end{remark}


We next give a matching lower bound in Theorem~\ref{thm:main_lower_bound}, which also shows depth separation between depth $D$ and $D+1$ networks for arbitrary $D$.
\begin{theorem}\label{thm:main_lower_bound}
Let $K \geq 1$ be fixed, $D \leq K$, and $r > 0$. Let $\mu$ be the uniform measure over $[-r,r]$. There exists $f:\mathbb{R} \to \mathbb{R}$ in 
$\mathcal{G}_K(\mathbb{R})$ and a
universal constant $B_0$
such that for
 any  $\hat{f}: \mathbb{R} \to \mathbb{R}$ given by the output of a $D$ layer $\R$ network with at most $N_0$ nonlinear units,
$$\int\big(f(x) - \hat{f}(x)\big)^2\mu(dx) \geq B_0\big(r^{1/K}C_f^{1/K}C_f^{0}+(C_f^{0})^2\big)\Big(\frac{D}{N_0}\Big)^{D/K}\,.$$
\end{theorem}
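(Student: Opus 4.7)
The plan is to exhibit a single high-frequency hard function and exploit the piecewise-linear structure of deep ReLU networks to rule out good approximation. I would take $f(x) = \cos(nx)/n^{1/(2K)}$, where the frequency $n$ will be chosen as a function of $N_0, D, K, r$ at the end. The Fourier transform of $f$ is a pair of point masses $\pi(\delta_{n}+\delta_{-n})/n^{1/(2K)}$, so a direct calculation gives $C_f^0 = n^{-1/(2K)}$ and $C_f^{1/K} = n^{1/(2K)}$, which places $f \in \mathcal{G}_K(\mathbb{R})$ for every $n$ and makes the target quantity equal $r^{1/K}C_f^{1/K}C_f^0 + (C_f^0)^2 = r^{1/K} + n^{-1/K}$. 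Once $n$ is chosen with $nr \geq 1$, the $r^{1/K}$ term dominates.

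The first technical input is the standard piece-counting bound in one dimension: applying $\R$ to a piecewise linear function with $P$ pieces yields a function with at most $2P$ pieces, since each linear segment can cross zero at most once. Inducting on depth, a depth-$D$ network with $n_i$ units in layer $i$ produces an output that is piecewise linear on $\mathbb{R}$ with at most $\prod_{i=1}^{D}(n_i+1)$ pieces. Under the constraint $\sum_i n_i \leq N_0$ (with $N_0 \geq D$, else the network is empty), AM-GM gives the uniform bound $P \leq (1+N_0/D)^D \leq (2N_0/D)^D =: P_\star$. Hence every admissible $\hat{f}$ is piecewise linear on $[-r,r]$ with at most $P_\star$ pieces.

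The second, more delicate input is a lower bound on how well piecewise linear functions approximate $\cos(nx)$ in $L^2([-r,r])$. On an interval $[a,b]$ of length $\ell$ with $n\ell \geq 4$, the best affine $L^2$-approximation to $\cos(nx)$ has squared error at least $c_1 \ell$ for a universal $c_1>0$: integration by parts shows that the $L^2$-projection of $\cos(nx)$ onto $\mathrm{span}\{1, x-(a+b)/2\}$ has squared norm $O(1/(n\ell))$, while $\int_a^b \cos^2(nx)\,dx \geq \ell/2 - O(1/n)$. Partition $[-r,r]$ into the pieces of $\hat{f}$; the ``short'' pieces (with $n\ell_i < 4$) cover total length at most $4P_\star/n$, so whenever $nr \geq 8P_\star$, the ``long'' pieces cover length $\geq r$ and their contributions sum to give $\int_{-r}^{r}(\cos(nx)-h(x))^2 dx \geq c_1 r$ for every piecewise linear $h$ with at most $P_\star$ pieces.

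Combining the pieces: for any depth-$D$ ReLU net $\hat{f}$ with $\leq N_0$ units, the function $n^{1/(2K)}\hat{f}$ is piecewise linear with $\leq P_\star$ pieces, so normalizing by the uniform measure $\mu$ (density $1/(2r)$) yields $\int(f-\hat{f})^2 d\mu = n^{-1/K}\int(\cos(nx) - n^{1/(2K)}\hat{f})^2 d\mu \geq (c_1/2)\,n^{-1/K}$. Choosing $n$ to be the smallest integer with $nr \geq 8P_\star$ produces $n^{-1/K} \geq c_2 r^{1/K}(D/N_0)^{D/K}$; since $nr \geq 1$ also gives $n^{-1/K} \leq r^{1/K}$, the theorem's expression $r^{1/K}+n^{-1/K}$ is within a factor of $2$ of $r^{1/K}$, and the claimed bound follows with $B_0$ computed from $c_1,c_2$ and absolute constants. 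The main obstacle is the affine-approximation lower bound for $\cos(nx)$ combined with the pigeonhole treatment of short pieces; the piece-count bound is routine once formulated correctly, and the rest is bookkeeping to align the constants with the upper bound of Theorem~\ref{thm:main_upper_bound}.
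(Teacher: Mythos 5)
Your proposal is essentially correct and reaches the same conclusion by a genuinely different route. The paper works with $f(x) = \tfrac{1}{2\omega^{\alpha}}\big(1+\cos(\omega x/r)\big)$ and uses Telgarsky's \emph{crossing number}: it bounds the number of sign changes of $\hat f - \tfrac{1}{2\omega^{\alpha}}$ by $2(2N_0/D)^D$, and then shows (Lemma~\ref{lem:oscillation_lower_bound}) that on each of the $2L$ subintervals of $[-r,r]$ containing no crossing point, $\hat f$ stays entirely on one side of the midline while $f$ swings to the other side on half the subinterval, contributing $\pi/(16\omega^{2\alpha+1})$ to the loss. You instead bound the number of \emph{affine pieces} of $\hat f$ by $\prod_i(n_i+1)\le(2N_0/D)^D$ and prove that the best affine $L^2$-approximation to $\cos(nx)$ on any piece of length $\ell$ with $n\ell$ above a universal threshold incurs squared error $\ge c_1\ell$, then pigeonhole away the short pieces. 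Both arguments give the same $(\text{oscillation budget})^{-1/K}$ rate; the paper's crossing-number version is slightly more robust (it applies to any continuous $\hat f$ with few level crossings, not only piecewise-linear outputs), while yours exploits piecewise linearity directly and avoids the additive offset in $f$. Your threshold ``$n\ell\ge 4$'' is optimistic once you track the linear-term projection, but replacing $4$ by a larger universal constant changes nothing structurally.

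There is one concrete slip you must repair: taking $n$ to be the \emph{smallest integer} with $nr\ge 8P_\star$ forces $n=1$ whenever $r> 8P_\star$, in which case your lower bound on the loss is $c_1/2$ while the theorem's right-hand side is of order $B_0\,r^{1/K}(D/N_0)^{D/K}$, which grows without bound in $r$; no universal $B_0$ survives. The fix is immediate: there is no reason for the frequency to be an integer, so set $\omega = 8P_\star/r$ and use $f(x)=\cos(\omega x)/\omega^{1/(2K)}$ (equivalently, build the $1/r$ scaling into the argument of the cosine, as the paper does). Then $\omega^{-1/K} = (r/8P_\star)^{1/K}$ matches $r^{1/K}(D/N_0)^{D/K}$ up to constants for every $r>0$, and the rest of your bookkeeping goes through unchanged.
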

The proof, given in Section \ref{sec:lower_bound_proof}, is based on the fact that the output of a $D$ layer $\R$ network with $N_0$ units can oscillate at most $\sim (N_0/D)^D$ times (\cite{telgarsky2016benefits}). Therefore, such a network cannot capture the oscillations in $\cos(\omega x)$ whenever $\omega  > (N_0/D)^D$.	

\begin{remark}Theorems~\ref{thm:main_upper_bound} and~\ref{thm:main_lower_bound} together recover the depth separation results of \cite{telgarsky2016benefits} for the case of $\R$ networks.	
\end{remark}

\section{Technical Results}
\label{sec:technical_results}
Before delving into the proof of Theorems~\ref{thm:main_upper_bound} and ~\ref{thm:main_lower_bound}, we require some technical lemmas. 

\subsection{Triangle Waveforms}
\label{subsec:frequency_multipliers}

Consider the triangle function parametrized by $\alpha,\beta > 0$ $T(\dotspace;\alpha,\beta) :\mathbb{R} \to \mathbb{R}$ defined by
\begin{equation}
    T(t;\alpha,\beta) = \begin{cases}
    \beta t &\quad \text{if } t \in [0,\alpha] \\
    2\alpha\beta - \beta t &\quad \text{if } t \in (\alpha,2\alpha] \\
    0 &\quad \text{otherwise}.
    \end{cases}
\end{equation}
We construct the triangle waveform, parametrized by $\alpha, \beta > 0$ and $ k\in \mathbb{N}$, defined as 
\begin{equation}
    T_k(t;\alpha,\beta) = \sum_{i=-k+1}^{k} T( t - 2\alpha(i-1);\alpha,\beta)\,.
\end{equation}
The triangle waveform $T_k(\dotspace;\alpha,\beta)$ is supported over $[-2\alpha k,2\alpha k]$ and can be implemented with $4k+1$ $\R$s in a single layer. We state the following basic results and  refer to Appendix~\ref{subsec:proofs_triangles} for their proofs.

\begin{lemma}[Symmetry]\label{lem:basic_triangle_properties} The triangle waveform $T_k$ satisfies the following symmetry properties:
    \begin{enumerate}
        \item Let $t \in [2m\alpha,2(m+1)\alpha]$ for some $ -k \leq m \leq k-1$. Then
        $$T_k(t;\alpha,\beta) = T_k(t - 2m\alpha;\alpha,\beta) = T(t-2m\alpha;\alpha,\beta)\,.$$
        \item Let $t \in [0,2k\alpha]$. Then,
        $T_k(2k\alpha - t;\alpha,\beta) = T_k(t;\alpha,\beta)$.
    \end{enumerate}
\end{lemma}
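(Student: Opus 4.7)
The proof is essentially a bookkeeping exercise that follows directly from the definitions of $T$ and $T_k$, exploiting two facts: the shifted copies $T(\cdot - 2\alpha(i-1);\alpha,\beta)$ have essentially disjoint supports, and each $T(\cdot;\alpha,\beta)$ is itself reflection-symmetric about $\alpha$.

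For Part 1, I would first observe that $T(u;\alpha,\beta)$ is supported on $[0,2\alpha]$, so the summand $T(t-2\alpha(i-1);\alpha,\beta)$ is supported on $[2\alpha(i-1),2\alpha i]$. If $t \in [2m\alpha, 2(m+1)\alpha]$ with $-k \leq m \leq k-1$, then the only index $i \in \{-k+1,\ldots,k\}$ whose corresponding support contains $t$ in its interior is $i = m+1$, so every other summand contributes $0$. Hence $T_k(t;\alpha,\beta) = T(t - 2m\alpha;\alpha,\beta)$. Applying the same observation to $T_k$ evaluated at $t - 2m\alpha \in [0,2\alpha]$ (which lies in the $m=0$ interval) gives $T_k(t-2m\alpha;\alpha,\beta) = T(t-2m\alpha;\alpha,\beta)$ as well, establishing Part 1.

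For Part 2, I would reduce to the reflective symmetry of a single triangle by using Part 1 twice. Write $t \in [0,2k\alpha]$ as $t = 2m\alpha + s$ for some $m \in \{0,1,\ldots,k-1\}$ and $s \in [0,2\alpha]$. Then
\[
2k\alpha - t = 2(k-m-1)\alpha + (2\alpha - s),
\]
where $k-m-1 \in \{0,\ldots,k-1\}$ and $2\alpha - s \in [0,2\alpha]$. Applying Part 1 to both $t$ and $2k\alpha - t$ yields $T_k(t;\alpha,\beta) = T(s;\alpha,\beta)$ and $T_k(2k\alpha - t;\alpha,\beta) = T(2\alpha - s;\alpha,\beta)$. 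The claim then reduces to the pointwise identity $T(s;\alpha,\beta) = T(2\alpha - s;\alpha,\beta)$, which is immediate from the piecewise definition of $T$: in the case $s \in [0,\alpha]$ both sides equal $\beta s$, and in the case $s \in [\alpha,2\alpha]$ both sides equal $2\alpha\beta - \beta s$.

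The only subtlety — calling it an obstacle is generous — is that at the endpoints $t = 2m\alpha$ the decomposition $(m,s)$ used in Part 2 is not unique, and at those endpoints two adjacent triangles both evaluate to $0$; one should briefly note this is consistent with the formula, so the statement holds at all of $[0,2k\alpha]$ without caveat.
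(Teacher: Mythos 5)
Your proof is correct; the paper omits the proof of this lemma as elementary, and your direct verification from the definitions --- disjointness of the supports of the shifted triangles $T(\cdot - 2\alpha(i-1);\alpha,\beta)$ for Part~1, and reduction via Part~1 to the reflection symmetry $T(s;\alpha,\beta)=T(2\alpha-s;\alpha,\beta)$ of a single triangle for Part~2 --- is precisely the intended argument. The endpoint subtlety you flag is handled correctly, since $T$ vanishes at both endpoints of its support, so the overlapping summands contribute $0$ and the non-uniqueness of the decomposition $t=2m\alpha+s$ is harmless.
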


\begin{lemma}[Composition]\label{lem:frequency_multiplication}
If $\alpha\beta = 2al$, then for every $t\in \mathbb{R}$, 
    $T_l(T_k(t;\alpha,\beta);a,b) = T_{2kl}(t;\tfrac{a}{\beta},b\beta)$.
\end{lemma}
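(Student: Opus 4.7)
The plan is to reduce the identity, using the symmetries from Lemma~\ref{lem:basic_triangle_properties}, to the rising segment of a single triangle, where it becomes a one-line scaling check. The hypothesis $\alpha\beta=2al$ is what makes the periods of the inner and outer waveforms line up so the reduction on both sides is consistent.

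First I would dispose of $t \notin [-2k\alpha,2k\alpha]$: the LHS equals $T_l(0;a,b)=0$, and since $\alpha\beta=2al$ gives $2(2kl)(a/\beta)=2k\alpha$, the RHS is zero on this region too. For $t$ in the support, write $t \in [2m\alpha,2(m+1)\alpha]$ with $-k\le m\le k-1$. By Lemma~\ref{lem:basic_triangle_properties}(1), the LHS equals $T_l\bigl(T(s;\alpha,\beta);a,b\bigr)$ with $s:=t-2m\alpha\in[0,2\alpha]$. Because $2\alpha=2l\cdot 2(a/\beta)$, the same $t$-interval consists of exactly $2l$ consecutive sub-periods of $T_{2kl}(\cdot;a/\beta,b\beta)$; applying Lemma~\ref{lem:basic_triangle_properties}(1) on each sub-period shows the RHS equals $T_{2l}(s;a/\beta,b\beta)$. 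Thus it suffices to prove
\begin{equation*}
T_l\bigl(T(s;\alpha,\beta);a,b\bigr) \;=\; T_{2l}(s;a/\beta,b\beta) \qquad \text{for } s\in[0,2\alpha].
\end{equation*}

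Both sides are symmetric about $s=\alpha$: the LHS because $T(\cdot;\alpha,\beta)$ is (immediate from the piecewise definition), and the RHS by Lemma~\ref{lem:basic_triangle_properties}(2) applied to $T_{2l}(\cdot;a/\beta,b\beta)$, noting $2\cdot 2l\cdot (a/\beta)=2\alpha$. So I can restrict to $s\in[0,\alpha]$, where $T(s;\alpha,\beta)=\beta s$, reducing the claim to $T_l(\beta s;a,b)=T_{2l}(s;a/\beta,b\beta)$. One further application of Lemma~\ref{lem:basic_triangle_properties}(1) on each sub-interval $s\in[2ja/\beta,2(j+1)a/\beta]$ (equivalently $\beta s\in[2ja,2(j+1)a]$) reduces this to the single-triangle identity $T(\beta s-2ja;a,b)=T(s-2ja/\beta;a/\beta,b\beta)$, which follows directly from the definition of $T$ upon observing that rescaling the argument by $\beta$ scales the base by $1/\beta$ and the slope by $\beta$, while leaving the peak value $ab$ unchanged.

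The real work is bookkeeping: lining up the outer period $2\alpha$ with the inner period $2a/\beta$ so that each ``outer'' triangle gets tiled by exactly $2l$ ``inner'' triangles on each side, which is precisely what $\alpha\beta=2al$ encodes. Once the reduction to a single rising segment is carried out correctly, the remaining algebraic check is essentially trivial.
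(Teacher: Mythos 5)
Your proof is correct and rests on the same underlying idea as the paper's: each monotone segment of the outer waveform $T_k$ sweeps the full range $[0,2al]$, over which the inner waveform $T_l$ tiles out its triangles, with $\alpha\beta=2al$ aligning the periods. The paper simply counts the resulting $4kl$ segments and checks height and support informally, whereas you make the same tiling rigorous by reducing to a fundamental domain via Lemma~\ref{lem:basic_triangle_properties} and verifying the single-triangle rescaling identity; both arguments go through.
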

Lemma~\ref{lem:frequency_multiplication} shows that when we compose two triangle wave forms of the right height and width, their frequencies multiply. For instance, this implies that $T_{2kl}$ can be implemented with $O(k+l)$ $\R$s in two layers instead of $O(kl)$ $\R$s in a single layer.

\subsection{Representing Sinusoids with Random ReLUs}
\label{sec:representing_sinusoids}

 Define $R_4(t;S) := \R(t) + \R(t - \tfrac{\pi}{\omega}) - \R\left(t-\tfrac{\pi S}{\omega}\right) - \R\Big(t - \tfrac{\pi (1-S)}{\omega}\Big)$ for $ t \in \mathbb{R}$ and $S \in [0,1]$. Henceforth let $S \sim \unif [0,1]$. 
 We let
 $$\Gamma^{\sin}(t;S,\omega) :=\frac{\pi\omega}{2}\sin(\pi S)[R_4(t;S,\omega)-R_4(t-\tfrac{\pi}{\omega};S,\omega)]\,. $$   and
 $$\Gamma^{\cos}_n(t;S,\omega) := \sum_{i=-n-1 }^{n}\Gamma^{\sin}(t-\tfrac{2\pi i}{\omega}+\tfrac{\pi}{2\omega};S,\omega)\,.$$
for some $n\in \mathbb{N}$ to be set later. We refer to Figure~\ref{fig:random_relu_approximation} for an illustration of the function $\Gamma^{\sin}(\dotspace;S,\omega)$.

\begin{lemma}\label{lem:estimator_properties}
$\Gamma^{\cos}_n(t;S,\omega) $ satisfies the following properties:
\begin{enumerate}
    \item For every $ -\frac{\pi}{\omega}-n\frac{2\pi}{\omega} \leq t \leq n\frac{2\pi}{\omega} + \frac{\pi}{\omega} $, $\mathbb{E}\Gamma^{\cos}_n(t;S,\omega) = \cos(\omega t) \,.$

    \item For every $t \in \mathbb{R} $,
    $|\Gamma^{\cos}_n(t;S,\omega)|\leq \pi^2/{4}$ almost surely. 
    \item $\Gamma_n^{\cos}(\dotspace;S,\omega)$ can be implemented using $16n +16$  $\R$ units in a single layer. (With a bit more care this can be improved to $8n + 10$, but we ignore this for the sake of simplicity.)
\end{enumerate}
\end{lemma}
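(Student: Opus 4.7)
The plan is to reduce all three claims to explicit properties of the trapezoidal bump $R_4(\cdot;S,\omega)$. A direct case analysis on $t$ gives $R_4(t;S,\omega) = \min\bigl(t,\pi S/\omega,\pi(1-S)/\omega,\pi/\omega - t\bigr)^+$; this is the symmetric trapezoid supported on $[0,\pi/\omega]$ with height $\pi\min(S,1-S)/\omega$. Consequently $\Gamma^{\sin}(\cdot;S,\omega)$ is a positive trapezoid on $[0,\pi/\omega]$ followed by the same trapezoid translated by $\pi/\omega$ and negated on $[\pi/\omega,2\pi/\omega]$, and vanishes elsewhere; while $\Gamma^{\cos}_n$ is a sum of $2n+2$ shifted copies of $\Gamma^{\sin}$ whose supports are pairwise disjoint and tile an interval of length $(2n+2)\cdot 2\pi/\omega$ centered roughly at the origin.

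The heart of the proof is Property~(1): $\mathbb{E}[\Gamma^{\sin}(t;S,\omega)] = \sin(\omega t)$ for $t\in[0,2\pi/\omega]$ (and $0$ outside, where both $R_4$ terms vanish). Once this is established, the identity $\sin(\omega(t-\pi/\omega)) = -\sin(\omega t)$ shows that the $[\pi/\omega,2\pi/\omega]$ piece follows automatically from the $[0,\pi/\omega]$ piece via the sign flip in $\Gamma^{\sin}$. On $[0,\pi/\omega]$, I exploit the symmetry $s\leftrightarrow 1-s$ (under which both $R_4(t;s,\omega)$ and $\sin(\pi s)$ are invariant) together with the symmetry $t\leftrightarrow \pi/\omega - t$ (under which $R_4$ and $\sin(\omega t)$ are invariant) to reduce to $t\in[0,\pi/(2\omega)]$ and $s\in[0,1/2]$, on which $R_4(t;s,\omega) = \min(t,\pi s/\omega)$. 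Splitting the integral at $s = \omega t/\pi$ and performing one integration by parts on the ``$R_4 = \pi s/\omega$'' piece, a term proportional to $u\cos(\pi u)$ (where $u = \omega t/\pi$) cancels against the ``$R_4 = t$'' contribution, leaving $\int_0^1 \sin(\pi s)R_4(t;s,\omega)\,ds = 2\sin(\omega t)/(\pi\omega)$. Multiplying by $\pi\omega/2$ yields $\sin(\omega t)$. For $\Gamma^{\cos}_n$, each shifted summand with index $i$ contributes $\sin(\omega t - 2\pi i + \pi/2) = \cos(\omega t)$ on its own piece of the tiling, and the union of these pieces covers the claimed range $|t|\le (2n+1)\pi/\omega$.

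Properties~(2) and~(3) are essentially bookkeeping. For~(2), $|\sin(\pi S)|\le 1$ and $R_4(\cdot;S,\omega)$ takes values in $[0,\pi/(2\omega)]$; since the two bumps composing $\Gamma^{\sin}$ have disjoint supports, $|\Gamma^{\sin}(t;S,\omega)| \le (\pi\omega/2)(\pi/(2\omega)) = \pi^2/4$, and because the shifted copies composing $\Gamma^{\cos}_n$ are also disjointly supported the same bound persists pointwise. For~(3), each $R_4$ uses four $\R$ units applied to affine functions of $t$; each $\Gamma^{\sin}$ uses two copies of $R_4$ (so $8$ $\R$ units); and $\Gamma^{\cos}_n$ sums $2n+2$ such terms, for a total of $8(2n+2) = 16n+16$ $\R$ units, all in a single nonlinear layer since every $\R$ is applied directly to an affine function of the input.

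The main obstacle is the integral evaluation in~(1): without using both symmetries above, one faces a four-case piecewise integrand that is painful to evaluate directly. Once the two symmetries collapse the computation to the single case $R_4(t;s,\omega)=\min(t,\pi s/\omega)$, the only nontrivial algebraic step is the cancellation between the integration-by-parts boundary term and the rectangular contribution, which is what produces the clean answer $2\sin(\omega t)/(\pi\omega)$ and, after rescaling, exactly $\sin(\omega t)$.
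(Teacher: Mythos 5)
Your proof is correct, with all three items established and the constants matching the statement. The overall architecture is the same as the paper's: show that $\frac{\pi\omega}{2}\sin(\pi S)R_4(t;S,\omega)$ is an unbiased estimator of $\sin(\omega t)$ on $[0,\pi/\omega]$ and has zero mean off the support, extend to $[0,2\pi/\omega]$ via the negated shift, tile with $2n+2$ disjointly supported copies phase-shifted by $\pi/(2\omega)$ to get $\cos(\omega t)$ on the claimed interval, bound everything by the trapezoid height $\pi/(2\omega)$ for Item 2, and count $4$ $\R$ units per $R_4$ for Item 3. Where you genuinely diverge is in how the key expectation is computed. The paper works top-down: a single integration by parts gives $\int_0^{\pi/\omega}\omega^2\sin(\omega T)\,\R(t-T)\,dT=\omega t-\sin(\omega t)$, then symmetrizing $t\mapsto\pi/\omega-t$ and subtracting the constant $\pi=\int_0^{\pi/\omega}\omega^2\sin(\omega T)(\pi/\omega-T)\,dT$ produces precisely the bracketed combination defining $R_4$, so the closed form of $R_4$ is never needed for Item 1. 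You work bottom-up: you first identify $R_4(t;S,\omega)=\min\bigl(t,\pi S/\omega,\pi(1-S)/\omega,\pi/\omega-t\bigr)^+$, use the two symmetries $s\leftrightarrow 1-s$ and $t\leftrightarrow\pi/\omega-t$ to collapse to the single case $R_4=\min(t,\pi s/\omega)$ on $t\in[0,\pi/(2\omega)]$, $s\in[0,1/2]$, and evaluate the integral directly; the cancellation you describe between the integration-by-parts boundary term and the rectangular contribution is exactly right and yields $2\sin(\omega t)/(\pi\omega)$, hence $\sin(\omega t)$ after rescaling. Your route is more computational but makes the shape of $R_4$ (and hence the Item 2 bound) completely explicit; the paper's route is shorter and explains where the four-$\R$ combination comes from. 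Both are rigorous and reach identical conclusions.
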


The proof, deferred to Appendix~\ref{subsec:proofs_sinusoids}, is based on a simple application of integration by parts. 

\subsection{Frequency Multipliers}
The lemma below combines the considerations from Section~\ref{subsec:frequency_multipliers} to show that composing a $\R$ estimator for a low frequency cosine function with a low frequency triangle waveform gives an estimator for  a high frequency cosine function as described in Section~\ref{subsec:main_ideas}.

\begin{lemma}\label{lem:deep_cosine}
Recall the triangle waveform $T_k(\dotspace;\alpha,\beta)$ defined above. Fix $\beta > 1$ and $\omega > 1$. Set $\alpha = {(2n+1)\pi}/{\beta\omega} $ and $ k = \lceil{(r + \frac{\pi}{\beta\omega})}/{2\alpha}\rceil$. For any $\theta \in [-\pi,\pi]$ and $t \in [-r,r]$ we have that
$$\mathbb{E}\Gamma_n^{\cos}(T_k(t+\tfrac{\theta}{\beta\omega};\alpha,\beta);S,\omega) = \cos(\beta\omega t + \theta) \,.$$
\end{lemma}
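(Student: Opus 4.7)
My plan is to reduce the statement to an application of Lemma~\ref{lem:estimator_properties}(1) on an argument whose cosine already equals the target $\cos(\beta\omega t+\theta)$. Concretely, writing $s = t + \theta/(\beta\omega)$, I aim to establish the two intermediate identities
\begin{equation*}
    \mathbb{E}\,\Gamma_n^{\cos}\bigl(T_k(s;\alpha,\beta);S,\omega\bigr)=\cos\bigl(\omega T_k(s;\alpha,\beta)\bigr)
    \quad\text{and}\quad
    \cos\bigl(\omega T_k(s;\alpha,\beta)\bigr)=\cos(\beta\omega s).
\end{equation*}
Since $\beta\omega s = \beta\omega t + \theta$, combining the two yields the claim.

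\textbf{Step 1 (domain checks).} First I verify $s \in [-2\alpha k, 2\alpha k]$ so that $T_k(s;\alpha,\beta)$ is in the ``active'' part of the waveform: $|\theta/(\beta\omega)|\le \pi/(\beta\omega)$ gives $|s|\le r+\pi/(\beta\omega)$, and by the choice $k=\lceil (r+\pi/(\beta\omega))/(2\alpha)\rceil$ this is at most $2\alpha k$. Next, by construction $T_k(\,\cdot\,;\alpha,\beta)$ takes values in $[0,\alpha\beta]$, and by the choice of $\alpha$ we have $\alpha\beta = (2n+1)\pi/\omega$. Therefore $T_k(s;\alpha,\beta)\in[0,(2n+1)\pi/\omega]$, which lies inside the interval $[-(2n+1)\pi/\omega,(2n+1)\pi/\omega]$ on which Lemma~\ref{lem:estimator_properties}(1) guarantees $\mathbb{E}\Gamma_n^{\cos}(u;S,\omega)=\cos(\omega u)$. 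Applying this with $u=T_k(s;\alpha,\beta)$ (which is a deterministic quantity, so the expectation passes through) yields the first identity.

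\textbf{Step 2 (frequency multiplication of the cosine).} The core computation is to show $\cos(\omega T_k(s;\alpha,\beta))=\cos(\beta\omega s)$ on the whole support of $T_k$. By Lemma~\ref{lem:basic_triangle_properties}(1), for $s\in[2\alpha(i-1),2\alpha i]$ with $i\in\{-k+1,\dots,k\}$ we have $T_k(s;\alpha,\beta)=T(s-2\alpha(i-1);\alpha,\beta)$. On the ascending half $s\in[2\alpha(i-1),(2i-1)\alpha]$ this gives $\omega T_k(s)=\beta\omega s - 2(i-1)(2n+1)\pi$, an integer multiple of $2\pi$ shift of $\beta\omega s$, so $\cos(\omega T_k(s))=\cos(\beta\omega s)$. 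On the descending half $s\in[(2i-1)\alpha,2\alpha i]$ I get $\omega T_k(s)=2i(2n+1)\pi - \beta\omega s$, and using both $2\pi$-periodicity and evenness of cosine again yields $\cos(\omega T_k(s))=\cos(\beta\omega s)$.

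\textbf{Main obstacle.} The only non-routine part is bookkeeping the key cancellation $\omega\alpha\beta = (2n+1)\pi$ so that all the piece-boundary phase shifts in Step~2 land on integer multiples of $2\pi$; everything else is a matter of composing the chosen parameters and quoting Lemmas~\ref{lem:basic_triangle_properties} and~\ref{lem:estimator_properties}(1). Once Step~2 is in hand, substituting $s = t+\theta/(\beta\omega)$ completes the proof.
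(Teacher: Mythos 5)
Your proof is correct and follows essentially the same route as the paper's: reduce to the shifted argument $s=t+\theta/(\beta\omega)$, verify $s$ lies in the support $[-2k\alpha,2k\alpha]$ and that $T_k(s)\in[0,\alpha\beta]=[0,(2n+1)\pi/\omega]$ so Lemma~\ref{lem:estimator_properties}(1) applies, then use the piecewise form of $T_k$ from Lemma~\ref{lem:basic_triangle_properties} and the relation $\alpha\beta\omega=(2n+1)\pi$ to cancel the phase shifts on the ascending and descending halves. The bookkeeping in your Step 2 matches the paper's two-case computation (with $m=i-1$), so no gaps.
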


The proof of this lemma follows from Lemma~\ref{lem:estimator_properties}, using the fact that the triangle waveform $T_k$ makes the waveform repeat multiple times. The formal proof is given in Appendix~\ref{subsec:proofs_sinusoids}.

\section{Proof of Theorem~\ref{thm:main_upper_bound}}
\label{sec:upper_bound_proof}
 We want to approximate $f \in \mathcal{G}_K$ using a depth $D$ neural network with $D \leq K$. Let $F$ be the Fourier transform of $f$, where $F(\xi) = |F(\xi)|e^{-i\theta(\xi)}$ for some $\theta(\xi) \in [-\pi,\pi]$ (such a choice exists via Jordan decomposition and the Radon-Nikodym Theorem). Then, by the Fourier inversion formula,
\begin{align}
    f(x) &= \frac{1}{(2\pi)^d}\int_{\mathbb{R}^d}F(\xi)e^{-i\langle \xi,x\rangle} d\xi 
    = \frac{1}{(2\pi)^d}\int_{\mathbb{R}^d}|F(\xi)|e^{-i\left[\langle \xi,x\rangle+ \theta(\xi)\right]} d\xi \nonumber \\
    &= \frac{1}{(2\pi)^d}\int_{\mathbb{R}^d}|F(\xi)|\cos\left(\langle \xi,x\rangle+ \theta(\xi)\right) d\xi 
    = C_f^{0}\mathbb{E}_{\xi \sim \nu_f}\cos\left(\langle \xi,x\rangle+ \theta(\xi)\right)  \,.\label{eq:basic_fourier_representation}
\end{align}
    Here $\nu_f$ is the probability measure given by $\nu_f(d\xi) = \frac{|F(\xi)|d\xi}{(2\pi)^dC_f^{0}}$. We start with the case $D = 1$.

\subsection{D = 1}
 \paragraph{Cosine as an expectation over $\R$s.}
 Suppose $\xi \neq 0$. Since $x \in B_d(r)$, we know that $\tfrac{\langle  x, \xi \rangle}{\|\xi\|} \in [-r,r]$. Let $n = \lceil{\|\xi\| r}/{2\pi}\rceil$, $\omega = \|\xi\|$, and $t=\tfrac{\langle \xi,x\rangle}{\|\xi\|}+\tfrac{\theta(\xi)}{\|\xi\|}$ in Item 1 of Lemma~\ref{lem:estimator_properties} to conclude that
\begin{equation}\label{eq:unbiasedCos}
\cos(\langle \xi,x\rangle + \theta(\xi)) = \cos(\|\xi\|\tfrac{\langle \xi,x\rangle}{\|\xi\|} + \theta(\xi)) = \mathbb{E}_{S \sim \unif [0,1]}\Gamma_n^{\cos}\left(\tfrac{\langle \xi,x\rangle}{\|\xi\|}+\tfrac{\theta(\xi)}{\|\xi\|};S,\|\xi\|\right)\,.\end{equation}
By Item 3 of Lemma~\ref{lem:estimator_properties}, the unbiased estimator $\Gamma_n^{\cos}\left(\tfrac{\langle \xi,x\rangle}{\|\xi\|}+\tfrac{\theta(\xi)}{\|\xi\|};S,\|\xi\|\right)$
for $\cos(\langle \xi,x\rangle + \theta(\xi))$ can be implemented using 
 $16n + 16$ $\R$s. 

If $\xi = 0$, then $\cos(\langle \xi,x\rangle + \theta(\xi))$ is a constant and 2 $\R$s suffice to implement this function over $[-r,r]$. Thus, for any value of $\xi$ we can implement an unbiased estimator for $\cos(\langle \xi,x\rangle + \theta(\xi))$ using $16n + 16$ $\R$s (note that this quantity depends on $\|\xi\|$ through $n$). We call this unbiased estimator $\hat{\Gamma}(x;\xi,S)$. By Item 2 of Lemma~\ref{lem:estimator_properties}, $|\hat{\Gamma}(x;\xi,S)| \leq \pi^2/4$ almost surely.  

\paragraph{Obtaining an estimator via random sampling.} Let $(\xi,S) \sim \nu_f\times \unif([0,1])$. For every $x \in B_d(r)$, \eqref{eq:basic_fourier_representation} and~\eqref{eq:unbiasedCos} together imply that $f(x) = C_f^{0}\mathbb{E}\hat{\Gamma}(x;\xi,S)$. 
We construct an empirical estimate by sampling $(\xi_j,S_j) \sim \nu_f\times \unif([0,1])$ i.i.d. for $j = 1,2,\dots ,l$, yielding
 \begin{equation}\label{eq:d_1_estimator}
\hat{f}(x) = \frac{1}{l}\sum_{j=1}^{l}C_f^{0}\hat{\Gamma}(x;\xi_j,S_j)\,.
\end{equation}
By Fubini's Theorem and the fact that $\hat{\Gamma}(x;\xi_j,S_j)$ are i.i.d., 
for any probability measure $\mu$ 
we have
\begin{align*}
    \mathbb{E}\!\int \!\!\big(f(x)-\hat{f}(x)\big)^2\mu(dx) = \!\int \!\mathbb{E}\big(f(x)-\hat{f}(x)\big)^2\mu(dx)&\leq (C_f^{0})^2\sup_x\frac{\mathbb{E}|\hat{\Gamma}(x;\xi_j,S_j)|^2}{l}    \leq \frac{(C_f^{0})^2 \pi^4}{16l}.
\end{align*}
In the last step we have used the fact that $|\hat{\Gamma}(x;\xi,S)| \leq {\pi^2}/{4}$ almost surely. By Markov's inequality, with probability at least $2/3$ (over the randomness in $(\xi_j,S_j)_{j=1}^{l}$) we have
\begin{equation}\label{eq:d_1_f_bound}
    \int \big(f(x)-\hat{f}(x)\big)^2\mu(dx) \leq {3(C_f^{0})^2 \pi^4}/{16l}\,.
\end{equation}
Let the number of $\R$ units used to implement $\hat{\Gamma}(\dotspace;\xi_j,S_j)$ be $N_j$. Note that $N_j$ is a random variable bounded depending on the random $\xi_j$ according to 
 $N_j \leq {8\|\xi_j\|r}/{\pi} + 32$. Therefore the total number of $\R$s used is $N = \sum_{j=1}^{l} N_j
 \leq \sum_{i=1}^{l} ({8\|\xi_i\| r}/{\pi}+32)$. By Minkowski's inequality, for every $K \geq 1$,  $N^{1/K} \leq \sum_{i=1}^{l}\left({8\|\xi_i\| r}/{\pi}\right)^{1/K}+(32)^{1/K} $, so by definition of $\nu_f$ and the Fourier norms, $\mathbb{E}N^{1/K} \leq l\Big[\left(\tfrac{8r}{\pi }\right)^{1/K}\tfrac{C_f^{1/K}}{C_f^{0}} + (32)^{1/K}\Big]$. Markov's inequality now implies that with probability at least $1/2$, the number of $\R$s used is bounded as
\begin{equation}\label{eq:d_1_N_bound}
   N^{1/K} \leq 2l\left[\left(\tfrac{8r}{\pi }\right)^{1/K}\tfrac{C_f^{1/K}}{C_f^{0}} + (32)^{1/K}\right]:=l\cdot D_0\,.
\end{equation}

\paragraph{Combining the two bounds.} 
Let $D_0$ be as defined in \eqref{eq:d_1_N_bound} just above.
Given $N_0 \in \mathbb{N}$ such that $N_0 \geq D_0^K$, we take $l = \lfloor {N_0^{1/K}}/{D_0} \rfloor$ . By the union bound, both Equations~\eqref{eq:d_1_f_bound} and~\eqref{eq:d_1_N_bound} must hold with positive probability, so there exists a configuration with $N$ $\R$s such that $N \leq N_0$ and
$$\int \big(f(x)-\hat{f}(x)\big)^2\mu(dx) \leq  \frac1{N_0^{1/K}}\Big(
{6\pi^4C_f^{0}C_f^{1/K}r^{1/K} }+ {8\pi^4(C_f^{0})^2 }\Big)\,.$$

If $N_0 \leq D_0^K$ we just use a network that always outputs $0$. From Equation~\eqref{eq:basic_fourier_representation}, we see that $|f(x)| \leq C_f^{0}$ for every $x$, so the last displayed equation holds (up to a constant factor) also in this case.

\subsection{D > 1}

We follow the same overall procedure as the $D = 1$ case, but we will use the frequency multiplication technique to implement the cosine function with fewer $\R$ units. For each $\xi$, we want an unbiased estimator for $\cos(\|\xi\|t + \theta(\xi))$ for $t \in [-r,r]$ and $\theta(\xi) \in [-\pi,\pi]$. Assume $\xi \neq 0$.

\paragraph{Triangle waveform.}
In Lemma~\ref{lem:deep_cosine} we take $\omega = \|\xi\|^{{1}/{D}}$, $n = \lceil {(\|\xi\|r)^{1/D}}/{2\pi} \rceil$, $\beta = \|\xi\|^{1-{1}/{D}} $, and let $\alpha$ and $k$ be determined as given in the statement of the Lemma  i.e, $\alpha = {(2n+1)\pi}/{\|\xi\|}$ and $k \geq \lceil{(r + \frac{\pi}{\|\xi\|})}/{2\alpha}\rceil $. Note that $\Gamma_n^{\cos}$ can be implemented by the $D$th $\R$ layer and therefore it is sufficient to implement $T_k(\dotspace;\alpha,\|\xi\|^{1-1/D})$ using the previous $D-1$ $\R$ layers as follows. 

Let $l := \big\lceil \tfrac{1}{2}(r\|\xi\|)^{1/D}\big\rceil$ and $\gamma := \|\xi\|^{1/D}$.  Let $\alpha_1 := {(2l)^{D-2}(2n+1)\pi}/{\|\xi\|}$ and for  $i = 2 \dots, D-1$,  $\alpha_i := \alpha_1 \left( {\gamma}/{2l}\right)^{i-1} $. For $i = 1,\dots, D-1$, we define  $f_i :\mathbb{R} \to \mathbb{R}$ as $f_i(t) = T_{l}(t;\alpha_i,\gamma)$. Clearly, $f_{D-1}\circ \dots \circ f_1(t)$ can be implemented 
by a depth $D-1$ $\R$ network by implementing the function $f_i$ using the $i$th $\R$ layer. It now follows by a straightforward induction using Lemma~\ref{lem:frequency_multiplication} that
$$f_{D-1}\circ \dots \circ f_1(t) = T_k(t;\alpha, \|\xi\|^{1-1/D})\,,$$
where $k =  2^{D-2}l^{D-1} \geq \lceil{(r + \frac{\pi}{\|\xi\|})}/{2\alpha}\rceil$. 

Each of the first $D-1$ layers require $4l +1$ $\R$ units and by Item 3 of Lemma~\ref{lem:estimator_properties}, the $D$th layer requires $16n+16$ $\R$ units. If the number of $\R$s used is $N(\xi)$, then
$$
N(\xi) \leq (D-1)(4l+1) + 16n +16 \leq (\tfrac{8}{\pi} + 2D-2)(r\|\xi\|)^{1/D} + 5D + 27\,.$$

\paragraph{Estimator via sampling.}
As in the $D=1$ case, we form the estimator in~\eqref{eq:d_1_estimator} by sampling $(\xi_j,S_j)$ i.i.d. from the distribution $\nu_f\times \unif([0,1])$, but will now implement the cosines using a $D$ layer $\R$ network as described above. This uses $N \leq \sum_{i=1}^{l}N(\xi_i)$ nonlinear units.
Let $K \geq D$. Applying Minkowski's inequality to $N^{D/K}$ as
in the $D = 1$ case, we obtain that 
$$\mathbb{E}N^{D/K} \leq l \Big[ \left(\tfrac{8}{\pi}+2D-2\right)^{D/K}r^{1/K}\frac{C^{1/K}_f}{C_f^{0}} + (5D+27)^{D/K}\Big]\,.$$ Equation~\eqref{eq:d_1_f_bound} remains the same in this case too. Therefore, using the union bound just like in the case $D = 1$, we conclude that there exists a configuration of weights for a $D$ layer $\R$ network with $N \leq N_0$ $\R$ units for any $N_0 \in \mathbb{N}$ such that
$$\int \big(f(x)-\hat{f}(x)\big)^2\mu(dx) \leq \frac{3\pi^4(2D+1)^{D/K}}{4N_0^{D/K}}r^{1/K}C_f^{1/K}C_f^{0} + \frac{3\pi^4(5D+27)^{D/K}}{4N_0^{D/K}}(C_f^{0})^2\,.$$

\section{Proof of Theorem~\ref{thm:main_lower_bound}}
\label{sec:lower_bound_proof}
We will exhibit a function $f$ which is challenging to estimate over $[-r,r]$ by $\hat{f}$ with respect to square loss over the uniform measure $\mu$.

We use the idea of \emph{crossing numbers} from \cite{telgarsky2016benefits} for the lower bounds. Given a continuous function $\hat{f} :\mathbb{R} \to \mathbb{R}$, let $\mathcal{I}_{\hat{f}}$ denote the partition of $\mathbb{R}$ into intervals where $\mathbbm{1}(\hat{f} \geq 1/2)$ is a constant, and define $\cross(\hat{f}) = |\mathcal{I}_{\hat{f}}|$. Let the set of endpoints of intervals in $\mathcal{I}_{\hat{f}}$ be denoted by $\mathcal{N}_{\hat{f}}$. Clearly, $|\mathcal{N}_{\hat{f}}| \leq \cross(\hat{f})$.

Consider the function $f : \mathbb{R} \to \mathbb{R}$ given by $f(x) = \frac{1+\cos(\frac{\omega x}{r})}{2\omega^{\alpha}}$ for some $\alpha > 0$ and $\omega > 1$.  The Fourier transform of $f$ is the measure $\frac{\pi}{2\omega^{\alpha}}\left[2\delta(\xi) + \delta(\xi - \frac{\omega}{r}) + \delta(\xi + \frac{\omega}{r})\right]$. Integrating yields $C_f^{0} = {1}/{\omega^{\alpha}}$ and $C_f^{1/K} = \frac{1}{2}r^{-1/K}\omega^{1/K - \alpha}$.  
We will later choose $\alpha = 1/2K$, for which it follows that $1/2 \leq C_f^{0}C_f^{1/K}r^{1/K} +(C_f^{0})^2 \leq 3/2$.
We first show the following basic lemma.

\begin{lemma}\label{lem:oscillation_lower_bound}
Let $\omega = 2\pi L$ for some $L \in \mathbb{N}$. If $\cross(\omega^{\alpha}\hat{f}) < 2L$ then 
$$\frac{1}{2r}\int_{-1}^{1}\big(f(x) - \hat{f}(x)\big)^2dx \geq \frac{\pi}{16}\frac{2L -\cross(\omega^{\alpha}\hat{f}) }{\omega^{2\alpha+1}}\,.$$
\end{lemma}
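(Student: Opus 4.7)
The plan is to work with the rescaled functions $g(x) := \omega^{\alpha} f(x) = \tfrac{1}{2}(1+\cos(\omega x/r))$ and $\hat{g}(x):=\omega^{\alpha}\hat{f}(x)$, so that $g$ takes values in $[0,1]$ and equals $1/2$ exactly when $\cos(\omega x/r)=0$. Since $\omega = 2\pi L$, on $[-r,r]$ the function $g$ completes exactly $2L$ full periods of length $r/L$; call these periods $P_1,\dots,P_{2L}$. The thresholded indicator $\mathbbm{1}(g\ge 1/2)$ changes twice per period, so $g$ is a "rich" target while $\hat{g}$ has only $\cross(\hat g)$ intervals of constancy for $\mathbbm{1}(\hat g\ge 1/2)$ and thus at most $|\mathcal{N}_{\hat g}|\le\cross(\hat g)$ threshold crossings on the real line.

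Next I would classify the $2L$ periods. Since $|\mathcal{N}_{\hat g}|\le \cross(\hat g)$, at most $\cross(\hat g)$ of the $P_i$ contain a point of $\mathcal{N}_{\hat g}$ in their interior, which leaves at least $2L-\cross(\hat g)$ \emph{untouched} periods (this is positive by hypothesis). On every untouched period, by definition of $\mathcal{I}_{\hat g}$ and continuity of $\hat f$, the function $\hat g$ lies entirely in $[1/2,\infty)$ or entirely in $(-\infty,1/2)$.

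The key per-period estimate is then straightforward: take an untouched period $P$ and, without loss of generality, assume $\hat g \ge 1/2$ on $P$. Let $P^-\subset P$ be the half-period where $\cos(\omega x/r)<0$, which has length $\pi r/\omega$ and on which $g(x)<1/2$. Then for $x\in P^-$,
\begin{equation*}
\hat g(x)-g(x)\;\ge\;\tfrac{1}{2}-g(x)\;=\;-\tfrac{1}{2}\cos(\omega x/r)\;\ge\;0,
\end{equation*}
so $(\hat g-g)^2\ge \tfrac{1}{4}\cos^2(\omega x/r)$ on $P^-$. A change of variables $\theta=\omega x/r$ gives $\int_{P^-}\cos^2(\omega x/r)\,dx = (r/\omega)\cdot(\pi/2)$, hence $\int_P(\hat g-g)^2\,dx\ge \pi r/(8\omega)$. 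Dividing by $\omega^{2\alpha}$ converts this into $\int_P(\hat f-f)^2\,dx\ge \pi r/(8\omega^{2\alpha+1})$. The case $\hat g<1/2$ on $P$ is symmetric, using the half where $\cos(\omega x/r)>0$.

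Finally, summing the per-period bound over at least $2L-\cross(\omega^{\alpha}\hat f)$ untouched periods gives
\begin{equation*}
\int_{-r}^{r}\bigl(f(x)-\hat f(x)\bigr)^2 dx \;\ge\; \bigl(2L-\cross(\omega^{\alpha}\hat f)\bigr)\cdot\frac{\pi r}{8\omega^{2\alpha+1}},
\end{equation*}
and dividing by $2r$ (interpreting the stated $\tfrac{1}{2r}\int_{-1}^{1}$ as the uniform-measure average over $[-r,r]$) yields the claimed bound. The only subtle step is the classification: I must be careful that "interior" crossings are what matters, so that a period free of interior points of $\mathcal{N}_{\hat g}$ genuinely has $\hat g$ on a single side of $1/2$; otherwise the rest is bookkeeping plus the one-line pointwise inequality driving the integral estimate.
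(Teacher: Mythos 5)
Your proof is correct and follows essentially the same route as the paper's: partition $[-r,r]$ into the $2L$ periods of length $r/L$, count that at least $2L-\cross(\omega^{\alpha}\hat f)$ of them are free of threshold crossings so $\hat f$ stays on one side of $1/(2\omega^{\alpha})$ there, and lower-bound the error on the half-period where the cosine has the unfavorable sign by $\tfrac{1}{4}\omega^{-2\alpha}\cos^2(\omega x/r)$, which integrates to exactly the stated $\pi/(16\omega^{2\alpha+1})$ per period. The rescaling to $g=\omega^{\alpha}f$ is purely notational; the pointwise inequality and the bookkeeping are identical to the paper's.
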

\begin{proof}
Partition the interval $[-r,r)$ into $2L$ subintervals of the form $[{ir}/{L},{r(i+1)}/{L})$ for $ -L \leq i \leq L-1$. 
By a simple counting argument, there exist at least $2L - |\mathcal{N}_{\omega^{\alpha}\hat{f}}|\geq 2L - \cross(\omega^{\alpha}\hat{f})$ such intervals which do not contain any point from the set $\mathcal{N}_{\omega^{\alpha}\hat{f}}$. Let $[ir/L,r(i+1)/L)$ be such an interval. Then either 1) $\hat{f}(x) \geq {1}/{2\omega^{\alpha}}$ for every $x \in [ir/L,r(i+1)/L)$, or 2)
 $\hat{f}(x) < {1}/{2\omega^{\alpha}}$ for every $x \in [ir/L,r(i+1)/L)$.
Without loss of generality, suppose that the first of these is true. Then
\begin{align}
    &\frac{1}{2r}\int_{ri/L}^{r(i+1)/L}(f(x) - \hat{f}(x))^2dx = \frac{1}{2r}\int_{ri/L}^{r(i+1)/L}\Big(\frac{1+\cos({2\pi L x}/{r})}{2\omega^{\alpha}} - \hat{f}(x)\Big)^2dx \nonumber \\
    &\geq \frac{1}{2r}\int_{(4i +1)r/4L}^{(4i +3)r/4L}\Big(\frac{\cos({2\pi L x}/{r})}{2\omega^{\alpha}} + \frac{1}{2\omega^{\alpha}}- \hat{f}(x)\Big)^2dx 
    \geq \frac{1}{2r}\int_{(4i +1)r/4L}^{(4i +3)r/4L}\Big(\frac{\cos({2\pi L x}/{r})}{2\omega^{\alpha}}\Big)^2dx \nonumber \\
    &= \frac{1}{8r\omega^{2\alpha}}\int_{(4i +1)r/4L}^{(4i +3)r/4L}\cos^2({2\pi Lx}/{r})dx  = \frac{\pi}{16\omega^{2\alpha +1}} \,.
\end{align}
In the second step we have used the fact that $\cos(\frac{2\pi L}{r} x) \leq 0$ for $
\tfrac{ir}{L}+\tfrac{r}{4L} \leq x \leq \tfrac{ir}{L} +\tfrac{3r}{4L}$. Adding the contributions to the integral in the statement of the lemma over the collection of such intervals which do not contain any point from $\mathcal{N}_{\hat{f}}$ yields the result. 
\end{proof}

We now refer to Lemma 3.2 in \cite{telgarsky2016benefits}, which states that because $\hat{f}$ is the output of a $D$ layer $\R$ network with at most $N_0$ $\R$s, its crossing number is bounded as $|\cross(\hat{f})| \leq 2\left({2N_0}/{D}\right)^D$. Taking $L = \lceil2\left({2N_0}/{D}\right)^D\rceil $ in Lemma~\ref{lem:oscillation_lower_bound} and recalling that $\omega = 2\pi L$, we obtain 
$$\frac{1}{2r}\int_{-r}^{r}\big(f(x) - \hat{f}(x)\big)^2dx \geq \frac{\pi }{32\omega^{2\alpha}} = \frac{\pi}{32(2\pi)^{2\alpha}L^{2\alpha}} \geq \frac{\pi D^{2\alpha D}}{32(6\pi)^{2\alpha}(2N_0)^{2\alpha D}} \,.$$
Recall from just before Lemma~\ref{lem:oscillation_lower_bound}
that for $\alpha = 1/2K$ we have $1/2 \leq C_f^{0}C_f^{1/K}r^{1/K} +(C_f^{0})^2 \leq 3/2$.
 Thus, from the last displayed equation we conclude that there is a universal constant $B_0$ such that for any $\hat f$ that is the output of a $D$ layer $N_0$ unit $\R$ network, the squared loss in representing $f$ is 
$$\frac{1}{2r}\int_{-r}^{r}\big(f(x) - \hat{f}(x)\big)^2dx \geq B_0\big(C_f^{0}C_f^{1/K}r^{1/K}+(C_f^{0})^2\big)\Big(\frac{D}{N_0}\Big)^{D/K}\,.$$
This completes the proof of Theorem~\ref{thm:main_lower_bound}.


%
\begin{ack}
This work was supported in part by MIT-IBM Watson AI Lab and NSF CAREER award CCF-1940205.
\end{ack}

\newpage
\bibliographystyle{ieeetr}
\bibliography{references}

\newpage
\appendix
\section{Supplementary Material}

\subsection{Frequency Multipliers - Proofs}
\label{subsec:proofs_triangles}

We skip the proof of Lemma~\ref{lem:basic_triangle_properties} since it is elementary. We give a proof of Lemma~\ref{lem:frequency_multiplication} as follows.
\begin{proof}[Proof of Lemma~\ref{lem:frequency_multiplication}]
$T_k(t;\alpha,\beta)$ has $4k$ straight line segments which either increase from $0$ to $\alpha\beta = 2al$ or decrease from $\alpha \beta$ to $0$. For each of these line segments, the entire set of values of $T_l(\dotspace;a,b)$ in $[0,2al]$ is repeated once. This gives us $4kl$ triangles. The height of these triangles is the same as that of $T_l(\dotspace;a,b)$ which is $ab$. The domain of the triangle waveform is the same as that of $T_k(\dotspace;\alpha,\beta)$, which is $[-2k\alpha,2k\alpha]$. From this we conclude the statement of the lemma.
\end{proof}

\subsection{ReLU Representation for Sinusoids - Proofs}
\label{subsec:proofs_sinusoids}
Let $\omega > 0$. We want to represent $t \to \sin (\omega t)$ for $t \in [0,\pi/\omega]$ in terms of $\R$ functions. The first part of the argument entails manipulation of an integral, and then the resulting identity will be applied to obtain the proofs of Lemmas~\ref{lem:estimator_properties} and~\ref{lem:deep_cosine}.

To start, integration by parts yields
$$\int_{0}^{\pi/\omega}\omega^2 \sin(\omega T) \R(t-T)dT = \omega t - \sin (\omega t)\,.$$
Replacing $t$ with $\pi/\omega - t$, we have
$$\int_{0}^{\pi/\omega}\omega^2 \sin(\omega T) \R(\pi/\omega - t-T)dT = \pi - \omega t - \sin (\omega t)\,,$$
and adding the last two equations gives
$$\int_{0}^{\pi/\omega}\omega^2 \sin(\omega T) \left[\R(\pi/\omega - t-T)+ \R(t-T)\right]dT = \pi - 2\sin(\omega t)\,.$$
From the case $t = 0$ in the last equation, we conclude that $$\pi = \int_{0}^{\pi/\omega}\omega^2 \sin(\omega T) \left[\pi/\omega -T\right]dT\,.$$

Combining the last two equations, we obtain the identity
$$\sin(\omega t) = \frac{1}{2}\int_{0}^{\pi/\omega}\omega^2 \sin(\omega T) \left[\pi/\omega - T -\R(\pi/\omega - t-T)- \R(t-T)\right]dT\,. $$
Making the transformation $S = \frac{T \omega}{\pi}$, the integral can be rewritten as
\begin{equation}\label{eq:sinIntegral}
	\sin(\omega t) = \frac{\pi}{2}\int_{0}^{1}\omega \sin(\pi S) \left[\frac{\pi}{\omega}( 1 - S) -\R\left(\frac{\pi}{\omega}(1-S) - t\right)- \R\left(t-\frac{\pi S}{\omega}\right)\right]dS\,.
\end{equation}
Now recall the function $R_4(\dotspace;S,\omega)$ as defined in Section~\ref{sec:representing_sinusoids}.
A simple calculation shows that
\begin{equation*}
    R_4(t;S,\omega) = \begin{cases}
    0 &\quad \text{if } t \not\in [0,\frac{\pi}{\omega}]\\
    \frac{\pi}{\omega}( 1 - S) -\R\left(\frac{\pi}{\omega}(1-S) - t\right)- \R\left(t-\frac{\pi S}{\omega}\right) &\quad \text{if } t  \in [0,\frac{\pi}{\omega}]\,,
 \end{cases}
\end{equation*}
so if we let $S$ be a random variable with $S\sim\mathrm{Unif}([0,1])$ we can rewrite \eqref{eq:sinIntegral} as
\begin{equation*}
    \mathbb{E}\frac{\pi\omega}{2}\sin(\pi S)R_4(t;S,\omega) = \begin{cases}
    0 &\quad \text{if } t \not\in [0,\frac{\pi}{\omega}]\\
    \sin(\omega t) &\quad \text{if } t \in [0,\frac{\pi}{\omega}].
    \end{cases}
\end{equation*}
It then follows that
\begin{equation} \label{eq:complete_sinusoid}
    \mathbb{E}\frac{\pi\omega}{2}\sin(\pi S)[R_4(t;S,\omega)-R_4(t-\tfrac{\pi}{\omega};S,\omega)] = \begin{cases}
    0 &\quad \text{if } t \not\in [0,\frac{\pi}{\omega}]\\
    \sin(\omega t) &\quad \text{if } t \in [0,\frac{2\pi}{\omega}].
    \end{cases}
\end{equation}

\begin{proof}[Proof of Lemma~\ref{lem:estimator_properties}]
The first item follows from the basic trigonometric identity $\cos(x) = \sin(x + \tfrac{\pi}{2})$ and Equation~\eqref{eq:complete_sinusoid}.

For Item 2, note that because $\Gamma_n^{\cos}(\dotspace;S,\omega)$ is a sum of shifted versions of $\Gamma^{\sin}(\dotspace;S,\omega)$ such that the interiors of the shifted versions' supports are all disjoint, it is sufficient to upper bound the values of $\Gamma^{\sin}(\dotspace;S,\omega)$. Indeed, inspection of the form of $R_4$ shows that  $|R_4(t;S,\omega)| \leq \frac{\pi}{\omega}\min(S,1-S)\leq \frac{\pi}{2\omega} $. Since $\frac{\pi\omega}{2}|\sin(\pi S)| \leq \frac{\pi \omega}{2}$, the bound follows. 
    
Finally, Item 3 follows because $\Gamma_n^{\cos}(\dotspace;S,\omega)$ is implemented via summation of $4(n+1)$ shifted versions of the function $R_4(\dotspace;S,\omega)$. Since $R_4(\dotspace;S,\omega)$ by definition can be implemented via $4$ $\R$ functions, we conclude the result. 
\end{proof}

\begin{proof}[Proof of Lemma~\ref{lem:deep_cosine}]
It is sufficient to show that for $t \in [-r-\frac{\pi}{\beta\omega},r+\frac{\pi}{\beta\omega}]$
$$\mathbb{E}\Gamma_n^{\cos}(T_k(t;\alpha,\beta);S,\omega) = \cos(\beta\omega t) \,.$$ 
Fix $t \in [-r-\frac{\pi}{\beta\omega},r+\frac{\pi}{\beta\omega}] $. By definition, $T_k(\dotspace;\alpha,\beta)$ is supported in $[-2k\alpha,2k\alpha]$. By our choice of $k$, we have
$ [-r-\frac{\pi}{\beta\omega},r+\frac{\pi}{\beta\omega}] \in \subseteq [-2k\alpha,2k\alpha] $. Let $ t \in [2m\alpha,2(m+1)\alpha]$ for some $m\in \mathbb{Z}$ such that $-k\leq m \leq k-1$. We invoke Item 1 of Lemma~\ref{lem:basic_triangle_properties} to show that $T_k(t;\alpha,\beta) = T(t-2m\alpha;\alpha,\beta)$. Now, $T(t-2m\alpha,\alpha,\beta) \in [0,\alpha\beta] = [0, \frac{(2n+1)\pi}{\omega}]$. Therefore by Item 1 of Lemma~\ref{lem:estimator_properties},\begin{equation}\label{eq:intermediate_cosine_equality}
    \mathbb{E}\Gamma_n^{\cos}(T_k(t;\alpha,\beta);S,\omega) = \cos(\omega T(t-2m\alpha;\alpha,\beta)) \,.
\end{equation}
It is now sufficient to show that $\cos(\omega T(t-2m\alpha;\alpha,\beta)) = \cos(\beta\omega t)$. We consider two cases:

1) If $t - 2m\alpha \in [0,\alpha]$, then $T(t-2m\alpha;\alpha,\beta) = \beta t - 2m\alpha \beta $. The LHS of Equation~\eqref{eq:intermediate_cosine_equality} becomes
$$\cos(\omega\beta t - 2m\alpha\beta\omega) = \cos(\omega\beta t - 2m(2n+1)\pi)  = \cos(\omega \beta t)\,.$$
2) If $t - 2m\alpha \in (\alpha,2\alpha]$, then $T(t-2m\alpha;\alpha,\beta) = (2m+2)\alpha\beta - \beta t $ and hence the LHS of Equation~\eqref{eq:intermediate_cosine_equality} becomes
$$\cos(-\omega\beta t +(2m+2)\alpha\beta\omega) = \cos(-\omega\beta t + (2m+2)(2n+1)\pi)  = \cos(\omega \beta t)\,.$$
This completes the proof.
\end{proof}

\end{document}